\newtheorem{theorem}{Theorem}
\newcommand{\cog}{\textsc{COG}}           % Framework name
\newcommand{\model}{\textsc{CogGNN}}      % Model name
\newcommand{\stc}{\textsc{STC}}           % Structured-Temporal Consistency
\newcommand{\cim}{\textsc{CIM}}           % Composite Interpretability Metric
\newcommand{\precisionk}{\mathrm{Precision@K}}
\title{EvoMail: Self-Evolving Cognitive Agents for Adaptive Spam and Phishing Email Defense}
\author{
Wei Huang\textsuperscript{\rm 1}\thanks{Equal contribution.}\,
\quad De-Tian Chu\textsuperscript{\rm 2}\footnotemark[1]\,
\quad Lin-Yuan Bai\textsuperscript{\rm 2}\,
\quad Wei Kang\textsuperscript{\rm 2}\,
\quad Hai-Tao Zhang\textsuperscript{\rm 2}\,
\quad Bo Li\textsuperscript{\rm 2}\,
\quad Zhi-Mo Han\textsuperscript{\rm 3}\,
\quad Jing Ge\textsuperscript{\rm 2}\,
\quad Hai-Feng Lin\textsuperscript{\rm 2}\thanks{Corresponding author: hfling@compintell.cn}
\\
\textsuperscript{\rm 1}\;People's Liberation Army 77606 Unit, Lasa 850000, China\\
\textsuperscript{\rm 2}\;Field Engineering College, Army Engineering University of PLA, Nanjing 210007, China\\
\textsuperscript{\rm 3}\;College Of Software, ZhengZhou University of Light Industry, Zhengzhou 450000, China\\
\texttt{hwazyx2013@163.com, Detian\_chu@aeu.edu.cn, linyuan\_bai@aeu.edu.cn, w\_kang2023@163.com, kirby8zhang@163.com,  13999932025@163.com, 542313460107@zzuli.edu.cn, 19822649042@163.com}
}
\begin{document}
\maketitle

\begin{abstract}
Modern email spam and phishing attacks have evolved far beyond keyword blacklists or simple heuristics, now focusing on complex intent exploitation that traditional systems struggle to understand \citep{Xin2025Enhancing}. Adversaries now craft multi-modal campaigns that combine natural-language text with obfuscated URLs, forged headers, and malicious attachments, adapting their strategies within days to bypass filters\citep{Lin2021Phishpedia}\citep{Wang2025EnhancingVLM}. Traditional spam detection systems, which rely on static rules or single-modality models, struggle to integrate heterogeneous signals or to continuously adapt, leading to rapid performance degradation.

We propose EvoMail, a self-evolving cognitive agent framework for robust detection of spam and phishing, building on recent advances in agent-based systems with memory and reflection capabilities \citep{Zhou2024CalibratedVLM}\citep{LIANG2025130470}. EvoMail first constructs a unified heterogeneous email graph that fuses textual content, metadata (headers, senders, domains), and embedded resources (URLs, attachments). A Cognitive Graph Neural Network (\model) enhanced by a Large Language Model (LLM) performs context-aware reasoning across these sources to identify coordinated spam campaigns. Most critically, EvoMail engages in an adversarial self-evolution loop: a ``red-team'' agent generates novel evasion tactics---such as character obfuscation or AI-generated phishing text\citep{Li2020TEXTSHIELD}---while the ``blue-team'' detector learns from failures, compresses experiences into a memory module\citep{Zhou2024CalibratedVLM}, and reuses them for future reasoning.

Extensive experiments on real-world datasets (Enron-Spam, Ling-Spam, SpamAssassin, and TREC) and synthetic adversarial variants demonstrate that EvoMail consistently outperforms state-of-the-art baselines in detection accuracy, adaptability to evolving spam tactics, and interpretability of reasoning traces. These results highlight EvoMail's potential as a resilient and explainable defense framework against next-generation spam and phishing threats.
\end{abstract}

% \begin{links}
%     \link{Code}{https://aaai.org/example/code}
%     \link{Datasets}{https://aaai.org/example/datasets}
%     \link{Extended version}{https://aaai.org/example/extended-version}
% \end{links}

\section{Introduction}
Email remains a universal communication backbone but also the primary vector for spam and phishing\citep{Goenka2024PhishingSurvey}. Contemporary campaigns are \emph{multi-source and rapidly adaptive\citep{Rossi2020TGN}}: natural and fluent text, obfuscated or shortened URLs, forged headers (SPF/DKIM/DMARC evasions), and polymorphic attachments. As filters are updated, attackers leverage powerful generative models \citep{liu2025luminamgpt} to mutate content within days via prompt-engineering or template-based generation. Traditional rule-based or single-modality classifiers\citep{Xiong2024SearchLLM} (e.g., keyword lists, Naive Bayes) perform well on known patterns but generalize poorly under tactic drift.

Advanced neural methods have improved robustness, yet three gaps persist: (i) \textbf{Heterogeneous fusion}: indicators span text, headers, domains, and attachments; modeling subtle cross-modal correlations\citep{Wang2024RNAErnie}\citep{Li2023LtrGCN} remains challenging\citep{Wang2025EnhancingVLM}\citep{Xin2024MmAP}. (ii) \textbf{Adaptation}: distribution shifts\citep{Li2023COLTR} from obfuscation, domain rotation, and AI-generated content quickly erode static models, a challenge addressed by recent domain adaptation methods\citep{Wang2024ContinualSurvey}\citep{Kirkpatrick2017EWC}\citep{Du2024}. (iii) \textbf{Contextual memory}: detectors rarely retain and reuse reasoning traces\citep{Li2023COLTR}\citep{Lyu2025Rethinking}, limiting recognition of variants of previously observed attacks.

We present \textbf{\cog}, a self-evolving cognitive agent that mimics human analysts\citep{Ren2024LLM-Graph-Survey}\citep{Li2023MRLtr}. \cog~unifies multi-source signals into a heterogeneous email graph and applies an LLM-enhanced \textbf{\model} for semantic-structural reasoning. A \emph{red--blue} adversarial loop continuously surfaces novel evasion tactics\citep{Li2023LtrGCN} and converts detection failures into compressed, reusable experiences, injected back into reasoning and fine-tuning. Our contributions:
\begin{itemize}
    \item \textbf{Self-evolving agent for spam/phishing:} a closed-loop framework combining heterogeneous email graphs\citep{Li2024GS2P}, LLM-augmented GNN reasoning, adversarial generation, and experience compression\citep{Li2024GS2P}.
    \item \textbf{\model:} an LLM-guided semantic attention and query optimizer that couples textual semantics with graph structure across content, headers, domains, and attachments.
    \item \textbf{Experience compression \& reuse:} distills failed traces into structured heuristics to stabilize adaptation under distributional and adversarial shifts.
    \item \textbf{Comprehensive evaluation:} static, shift, and cross-modal settings on public corpora and synthetic adversarial variants; we report accuracy/F1 as well as \stc~(structured temporal consistency) and \cim~(cognitive interpretability) for regulator-aligned transparency.
\end{itemize}
\begin{figure*}[t]
\centering
\includegraphics[width=0.95\linewidth]{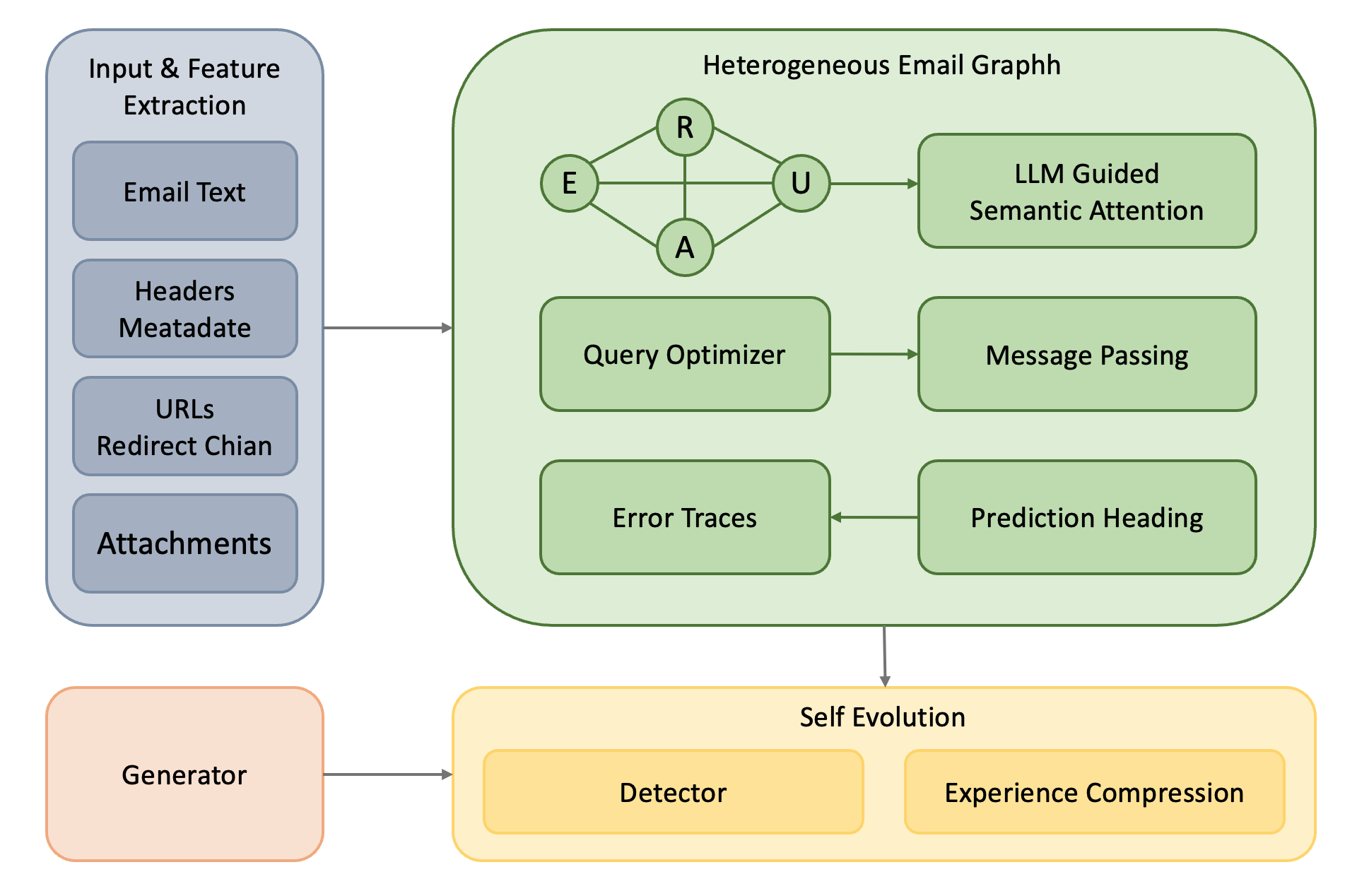} % replace with your figure
\caption{Schematic: heterogeneous email graph and high-attention evidence path
.}
\label{fig:schematic}
\end{figure*}
\section{Related Work}
\paragraph{Email spam and phishing detection.}
Classical filters (Bayes\citep{Metsis2006Spam}, SVM\citep{Fette2007Learning}, rule lists) and modern neural models (TextCNN\citep{Kim2014CNN}, Transformers) predominantly focus on content, with metadata and URLs often auxiliary. Graph-based perspectives connect senders, recipients, domains, and URLs but struggle to fuse semantics and structure end-to-end. \citep{Wu2021Survey} \citep{Li2025Anomaly}\citep{Schlichtkrull2017R-GCN}

\paragraph{LLMs for security text.}
LLMs excel at contextual reasoning and pattern abstraction, and have been successfully applied to tasks like phishing email detection\citep{Devlin2019BERT}\citep{Brown2020GPT3}\citep{Zhao2025SurveyLLM}; yet without persistent memory and guardrails, they can be brittle to prompt injection or novel obfuscation\citep{Wei2023Jailbroken}\citep{Zhou2024Hallucination}, and their inference efficiency remains a consideration \citep{wan2025d2o}. Integrating LLMs with structured models (graphs\citep{Zhao2023GLEM}\citep{Ede2022DEEPCASE}) is a promising path\citep{Wan2025SRPO} toward richer evidence aggregation\citep{Liang2021GNNSurvey}\citep{Qian2023LLM-Molecule}.

\paragraph{Self-evolving systems.}
Self-play\citep{Silver2016AlphaGo}, reflective memory, and inference-time optimization through reflection \citep{Zhuo2025Reflection} have shown promise in dynamic domains, alongside online prompt/model updates\citep{Xin2024VMT-Adapter} for tasks ranging from game playing to automated cyber defense\citep{Palmer2024CyberSurvey}. For email security, realizing safe, auditable, and data-efficient self-evolution remains under-explored\citep{Zhou2024Hallucination}\citep{Thapa2023FL-Phishing}.

\section{Problem Formulation}
Let $\mathcal{D}=\{D^{\text{text}}, D^{\text{meta}}, D^{\text{url}}, D^{\text{att}}\}$ denote email content, metadata (headers/senders/receivers/domains), URLs, and attachments. We build a heterogeneous graph $G=(V,E,X)$ with node types (emails, senders, receivers, domains, URLs, attachments) and relation types (\texttt{sent\_to}, \texttt{hosted\_on}, \texttt{contains}, \texttt{linked\_to}, \texttt{replied\_to}) \citep{Wang2021HGAN}. Each node $v\in V$ has features $x_v$ (e.g., text embeddings, header signals, URL features, file hashes/MIME).

Given an email (or email thread) $q$, the task is to predict $y\in\{0,1\}$ (ham vs.\ spam/phishing):
\begin{equation}
    \hat{y} = f_\theta\!\big( q, G \big) \in [0,1],
\end{equation}
where $f_\theta$ integrates local content and global graph context.

\section{Method: COG Framework}

\subsection{Overview}

Figure~\ref{fig:schematic} provides an overview of the \textbf{COG} framework. 
Incoming emails are represented as nodes of different types (emails, senders, recipients, domains, URLs, attachments), with edges capturing semantic and structural relations such as \texttt{sent-to}, \texttt{contains}, \texttt{hosted-on}, and \texttt{linked-to}. 
These nodes and edges together form a heterogeneous email graph, over which the LLM-enhanced CogGNN performs multi-source reasoning. 
Highlighted edges in the figure indicate a high-attention \emph{evidence path}, showing how EvoMail aggregates suspicious cues---e.g., forged headers, anomalous domains, or obfuscated URLs---into an interpretable reasoning trace that explains why an email is classified as spam or phishing. 
The full framework integrates four modules: (1) heterogeneous graph construction, (2) CogGNN reasoning with LLM-guided attention, (3) self-evolution via red--blue adversarial training and experience compression\citep{Khosla2023MemorySurvey}, and (4) explainable evidence-path backtracing.

\subsection{Notation and Definitions}

We summarize the notation used throughout this section:
\begin{itemize}
    \item $\mathcal{G} = (\mathcal{V}, \mathcal{E}, \mathcal{R})$: heterogeneous email graph with nodes $\mathcal{V}$, edges $\mathcal{E}$, and relation types $\mathcal{R}$.
    \item $v_i \in \mathcal{V}$: email node $i$ with raw feature vector $x_i \in \mathbb{R}^d$.
    \item $h_v^{(0)} \in \mathbb{R}^{d_h}$: initial embedding of node $v$ after feature transformation.
    \item $h_v^{(k)} \in \mathbb{R}^{d_h}$: hidden representation of node $v$ at GNN layer $k \in \{1,\ldots,L\}$.
    \item $\mathcal{N}(v)$: full neighborhood of node $v$; $\mathcal{N}_K(v) \subseteq \mathcal{N}(v)$: top-$K$ selected neighbors.
    \item $e_{uv} \in \mathcal{E}$: edge between nodes $u$ and $v$ with relation type $r_{uv} \in \mathcal{R}$.
    \item $\alpha_{uv}^{(k)} \in [0,1]$: attention weight from node $u$ to $v$ at layer $k$.
    \item $f_\theta: \mathbb{R}^d \rightarrow [0,1]$: detection model parameterized by $\theta$.
    \item $\mathcal{M}_t$: compressed memory at iteration $t$, storing tuples $(e,\hat y, \text{trace})$.
    \item $\mathcal{A}_t = \{E_{\text{adv}}^{(m)}\}_{m=1}^{M_t}$: set of adversarial samples at iteration $t$.
    \item $\phi: \mathcal{V} \rightarrow \mathbb{R}^{d_{\phi}}$: feature extraction function mapping emails to embedding space.
    \item $P(u,v) \in \mathbb{R}^{d_p}$: contextualized prompt embedding for node pair $(u,v)$.
    \item Hyperparameters: $K$ (neighbors), $L$ (GNN layers), $\tau$ (temperature), $\beta$ (balance), $\lambda,\mu$ (loss weights), $M_{\max}$ (memory budget).
\end{itemize}

\subsection{Heterogeneous Graph Construction}

\paragraph{Multi-modal feature extraction.}
For each email $v$, we extract features from multiple modalities and construct the initial representation:
\begin{align}
    x_v^{\text{text}} &= \text{TF-IDF}(\text{subject}(v)) \oplus \text{TF-IDF}(\text{body}(v)) \in \mathbb{R}^{d_t}, \\
    x_v^{\text{meta}} &= \Bigl[
        \begin{aligned}[t]
            &\text{hour}(v),\; \text{weekday}(v), \\
            &\text{length}(v),\; \text{attach\_count}(v)
        \end{aligned}
    \Bigr] \in \mathbb{R}^{d_m}, \\
    x_v^{\text{network}} &= \Bigl[
        \begin{aligned}[t]
            &\text{sender\_rep}(v),\; \text{domain\_age}(v), \\
            &\text{url\_count}(v)
        \end{aligned}
    \Bigr] \in \mathbb{R}^{d_n}, \\
    x_v &= x_v^{\text{text}} \oplus x_v^{\text{meta}} \oplus x_v^{\text{network}} \in \mathbb{R}^d
\end{align}
where $\oplus$ denotes vector concatenation, $d = d_t + d_m + d_n$, and each component captures different aspects of email characteristics.

\paragraph{Relation modeling.}
We define multiple relation types in $\mathcal{R} = \{r_1, r_2, \ldots, r_{|\mathcal{R}|}\}$:
\begin{align}
    r_{\text{domain}}(u,v) &= \mathbb{I}[\text{domain}(u) = \text{domain}(v)] \cdot w_{\text{domain}}, \\
    r_{\text{temporal}}(u,v) &= 
    \begin{aligned}[t]
        &\exp\!\Biggl(
        -\frac{|\text{timestamp}(u) - \text{timestamp}(v)|}{\sigma_t}
        \Biggr) \\
        &\cdot \; w_{\text{temporal}}
    \end{aligned}, \\
    r_{\text{semantic}}(u,v) &= \cos(\text{BERT}(u), \text{BERT}(v)) \cdot w_{\text{semantic}}, \\
    r_{\text{sender}}(u,v) &= \mathbb{I}[\text{sender}(u) = \text{sender}(v)] \cdot w_{\text{sender}}
\end{align}
where $\mathbb{I}[\cdot]$ is the indicator function, $\sigma_t > 0$ controls temporal decay, and $w_{\text{type}} > 0$ are learned relation weights.

\paragraph{Edge construction.}
An edge $e_{uv}$ exists if any relation exceeds threshold $\epsilon_r$:
\begin{equation}
\begin{split}
    e_{uv} \in \mathcal{E} \iff &\; \max_{r \in \mathcal{R}} r(u,v) > \epsilon_r, \\
    &\text{with weight } 
    W_{uv} = \sum_{r \in \mathcal{R}} r(u,v).
\end{split}
\end{equation}

\subsection{CogGNN: LLM-Enhanced Cognitive Graph Neural Network}

\paragraph{Initial embedding transformation.}
Raw features are transformed into hidden space via learnable projection:
\begin{equation}
    h_v^{(0)} = \text{LayerNorm}(\sigma(W_{\text{init}} x_v + b_{\text{init}})) \in \mathbb{R}^{d_h}
\end{equation}
where $W_{\text{init}} \in \mathbb{R}^{d_h \times d}$, $b_{\text{init}} \in \mathbb{R}^{d_h}$, and $\sigma(\cdot)$ is the ReLU activation.

\paragraph{Contextual prompt generation.}
For each node pair $(u,v)$, we construct a structured prompt and encode it:
\begin{align}
    \text{prompt\_str}(u,v) &= 
    \textsc{Template}\Bigl(
        \begin{aligned}[t]
            &\text{desc}(u), \; \text{desc}(v), \\
            &\text{relation}(e_{uv}), \; \text{task\_context}
        \end{aligned}
    \Bigr), \\
    P(u,v) &= \text{LLM}_{\text{encoder}}(\text{prompt\_str}(u,v)) \in \mathbb{R}^{d_p}
\end{align}
where $\text{LLM}_{\text{encoder}}$ produces contextualized embeddings capturing semantic relationships.

\paragraph{Adaptive neighbor selection.}
We compute a composite salience score integrating multiple signals:
\begin{align}
    s_{\text{struct}}(u,v) &= 
    \begin{aligned}[t]
        &\text{PageRank}(u) 
        + \frac{\text{deg}(u)}{\max_i \text{deg}(v_i)} \\
        &+ \frac{1}{1 + \text{shortest\_path}(u,v)}
    \end{aligned}, \\
    s_{\text{freq}}(u,v) &= \frac{\text{co\_occurrence}(u,v)}{\sqrt{\text{total\_count}(u) \cdot \text{total\_count}(v)}}, \\
    s_{\text{semantic}}(u,v) &= \frac{h_u^{(0)} \cdot h_v^{(0)}}{\|h_u^{(0)}\|_2 \|h_v^{(0)}\|_2}, \\
    S(u,v) &= w_s \cdot s_{\text{struct}}(u,v) + w_f \cdot s_{\text{freq}}(u,v) + \\ &w_c \cdot s_{\text{semantic}}(u,v)
\end{align}
where $[w_s, w_f, w_c] = \text{softmax}([a_s, a_f, a_c])$ with learnable parameters $a_s, a_f, a_c \in \mathbb{R}$. The top-$K$ neighbors are selected as:
\begin{equation}
\begin{aligned}
    \mathcal{N}_K(v) &= \{u_1, u_2, \ldots, u_K\} \\
    &\text{where } S(u_1,v) \geq S(u_2,v) \geq \ldots \geq S(u_K,v).
\end{aligned}
\end{equation}

\paragraph{LLM-guided attention mechanism.}
Building upon the principles of graph attention networks\citep{Velickovic2018GAT}, for each layer $k$, attention weights integrate semantic and structural reasoning\citep{Yang2025WCDT}:
\begin{align}
    \text{llm\_score}(u,v) &= \text{MLP}_{\text{attn}}(P(u,v)) \in [0,1], \\
    \text{struct\_features}(u,v) &= 
    \begin{aligned}[t]
        [&w_r^{\top} \text{onehot}(r_{uv}), \; \log(1+\text{deg}(u)), \\
         &\log(1+\text{deg}(v)), \; \tfrac{1}{1+\text{spath}(u,v)}]
    \end{aligned}, \\
    \text{struct\_score}(u,v) &= w_{\text{struct}}^{\top} \text{struct\_features}(u,v), \\
    e_{uv}^{(k)} &= 
    \begin{aligned}[t]
        &\text{llm\_score}(u,v) + \beta \cdot \text{struct\_score}(u,v) \\
        &+ \gamma \cdot 
        \frac{h_u^{(k-1)} \cdot h_v^{(k-1)}}{\|h_u^{(k-1)}\|_2 \, \|h_v^{(k-1)}\|_2}
    \end{aligned}, \\
    \alpha_{uv}^{(k)} &= \frac{\exp(e_{uv}^{(k)}/\tau)}{\sum_{j \in \mathcal{N}_K(v)} \exp(e_{jv}^{(k)}/\tau)}
\end{align}
where $\text{MLP}_{\text{attn}}$ is a multi-layer perceptron, $w_r \in \mathbb{R}^{|\mathcal{R}|}$ are relation embedding weights, $w_{\text{struct}} \in \mathbb{R}^4$, and $\gamma \geq 0$ controls the influence of current representations.

\paragraph{Multi-layer message passing.}
Node representations evolve through $L$ layers of message aggregation:
\begin{align}
    m_v^{(k)} &= \sum_{u \in \mathcal{N}_K(v)} \alpha_{uv}^{(k)} \cdot (W_{\text{neigh}}^{(k)} h_u^{(k-1)} + W_{\text{edge}}^{(k)} \text{embed}(r_{uv})), \\
    \tilde{h}_v^{(k)} &= W_{\text{self}}^{(k)} h_v^{(k-1)} + m_v^{(k)} + b^{(k)}, \\
    h_v^{(k)} &= \text{LayerNorm}(\sigma(\tilde{h}_v^{(k)})) + \text{Dropout}(h_v^{(k-1)})
\end{align}
where $W_{\text{neigh}}^{(k)}, W_{\text{self}}^{(k)}, W_{\text{edge}}^{(k)} \in \mathbb{R}^{d_h \times d_h}$ are layer-specific transformation matrices, $\text{embed}(r_{uv}) \in \mathbb{R}^{d_h}$ embeds relation types, and we include residual connections for training stability\citep{He2015ResNet}.

\paragraph{Final prediction.}
The detection score combines multi-layer representations:
\begin{align}
    z_v &= \text{Concat}([h_v^{(1)}, h_v^{(2)}, \ldots, h_v^{(L)}]) \in \mathbb{R}^{L \cdot d_h}, \\
    \hat{y}_v &= \text{sigmoid}(W_{\text{out}} z_v + b_{\text{out}})
\end{align}
where $W_{\text{out}} \in \mathbb{R}^{1 \times L \cdot d_h}$ and $b_{\text{out}} \in \mathbb{R}$.

\subsection{Self-Evolution via Red--Blue Adversarial Training}

\paragraph{Red team: Adversarial sample generation.}
The red team generates adversarial emails through gradient-based and semantic perturbations\citep{Goodfellow2015Adversarial}\citep{Ganguli2022RedTeaming}:
\begin{align}
    E_{\text{grad}} &= E_{\text{seed}} + \epsilon \cdot \text{sign}(\nabla_{E_{\text{seed}}} \log f_\theta(E_{\text{seed}})), \\
    E_{\text{semantic}} &= \text{SemanticMutate}(E_{\text{seed}}, \mathcal{V}_{\text{vocab}}, \rho_{\text{mut}}), \\
    E_{\text{hybrid}} &= \text{Combine}(E_{\text{grad}}, E_{\text{semantic}}, \lambda_{\text{hybrid}})
\end{align}
where $\epsilon > 0$ controls perturbation magnitude, $\mathcal{V}_{\text{vocab}}$ is the vocabulary for semantic mutations, $\rho_{\text{mut}} \in [0,1]$ is mutation rate, and $\lambda_{\text{hybrid}} \in [0,1]$ balances the combination.

The red team optimizes a multi-objective reward:
\begin{align}
    \text{Novelty}(E_{\text{adv}}) &= \min_{(e_i,\hat{y}_i,t_i) \in \mathcal{M}_t} \|\phi(E_{\text{adv}}) - \phi(e_i)\|_2, \\
    \text{Evasion}(E_{\text{adv}}) &= \max(0, 0.5 - f_\theta(E_{\text{adv}})), \\
    \text{Complexity}(E_{\text{adv}}) &= \frac{\|\phi(E_{\text{adv}}) - \phi(E_{\text{seed}})\|_2}{\|\phi(E_{\text{seed}})\|_2}, \\
    R_{\text{red}}(E_{\text{adv}}) &= 
    \begin{aligned}[t]
        &w_n \cdot \text{Novelty}(E_{\text{adv}}) 
        + w_e \cdot \text{Evasion}(E_{\text{adv}}) \\
        &- w_c \cdot \text{Complexity}(E_{\text{adv}})
    \end{aligned}
\end{align}
where $w_n + w_e + w_c = 1$ and $w_n, w_e, w_c \geq 0$.

\paragraph{Blue team: Failure analysis.}
The blue team detects failures and extracts detailed traces:
\begin{align}
    \mathcal{F}_t &= \{E \in \mathcal{A}_t : f_\theta(E) < \delta_{\text{fail}} \text{ and } \text{ground\_truth}(E) = 1\}, \\
    \text{Trace}(E_{\text{fail}}) &= \{h_v^{(k)}, \{\alpha_{uv}^{(k)}\}_{u \in \mathcal{N}_K(v)}, \text{path}(v)\}_{k=1}^L
\end{align}
where $\delta_{\text{fail}} = 0.5$ is the failure threshold and $\text{path}(v)$ records the attention-based reasoning path.

\paragraph{Experience compression and memory management\citep{Lopez-Paz2022GEM}.}
Failed traces are clustered and compressed using $k$-medoids:
\begin{align}
    \text{dist}(E_1, E_2) &= 
    \begin{aligned}[t]
        &\|\phi(E_1) - \phi(E_2)\|_2 \\
        &+ \alpha_{\text{trace}} \cdot \text{trace\_dist}(\text{Trace}(E_1), \text{Trace}(E_2))
    \end{aligned}, \\
    \mathcal{C} &= 
    \begin{aligned}[t]
        &\text{KMedoids}(\mathcal{F}_t, k_t), \\
        &k_t = \min(|\mathcal{F}_t|,\, M_{\max} - |\mathcal{M}_t|)
    \end{aligned}, \\
    \tilde{e}_j &= 
    \begin{aligned}[t]
        &\arg\min_{e \in \mathcal{C}_j} 
        \sum_{e' \in \mathcal{C}_j} \text{dist}(e, e')
    \end{aligned}, \\
    \mathcal{M}_{t+1} &= 
    \begin{aligned}[t]
        &\text{LRU}\Bigl(\mathcal{M}_t \cup 
        \{(\tilde{e}_j, f_{\theta_t}(\tilde{e}_j), \\
        &\qquad \text{Trace}(\tilde{e}_j))\}_{j=1}^{|\mathcal{C}|},\, M_{\max}\Bigr)
    \end{aligned}
\end{align}
where $\alpha_{\text{trace}} \geq 0$ weights trace similarity, and LRU removes oldest entries when memory exceeds capacity.

\paragraph{Adversarial training objective.}
The total loss integrates task performance, memory consistency, and adversarial robustness:
\begin{align}
    \mathcal{L}_{\text{task}}(\theta) &= -\sum_{(v,y) \in \mathcal{D}} [y \log f_\theta(v) + (1-y) \log(1-f_\theta(v))], \\
    \mathcal{L}_{\text{cons}}(\theta) &= 
    -\sum_{(e,\hat{y},t) \in \mathcal{M}_t} 
    \begin{aligned}[t]
        [&\hat{y} \log f_\theta(e) \\
        &+ (1-\hat{y}) \log(1-f_\theta(e))]
    \end{aligned}, \\
    \mathcal{L}_{\text{adv}}(\theta) &= -\sum_{E \in \mathcal{A}_t} \log f_\theta(E) + \sum_{E \in \mathcal{A}_t} \log(1-f_\theta(E_{\text{benign}})), \\
    \mathcal{L}_{\text{reg}}(\theta) &= \|\theta\|_2^2, \\
    \mathcal{L}_{\text{total}}(\theta) &= \mathcal{L}_{\text{task}}(\theta) + \lambda \mathcal{L}_{\text{cons}}(\theta) + \mu \mathcal{L}_{\text{adv}}(\theta) + \nu \mathcal{L}_{\text{reg}}(\theta)
\end{align}
where $E_{\text{benign}}$ are benign emails, and $\nu > 0$ controls regularization strength.

\subsection{Explainable Reasoning}
To ensure transparency and auditability, a critical requirement for security applications, we develop an explainable reasoning module\citep{Ribeiro2016LIME}\citep{Lundberg2017SHAP}. Our explanation approach, which extracts the most influential evidence path, is one of several paradigms in graph explainability\citep{Vu2020PGM-Explainer}. It contrasts with counterfactual methods that identify minimal changes to flip a prediction, and with probabilistic approaches that generate a distribution over explanatory subgraphs. Furthermore, it differs from decomposition-based methods like Layer-wise Relevance Propagation (LRP)\citep{Bach2015LRP}, which explain predictions by propagating relevance scores backwards through the network.

\paragraph{Evidence path extraction with confidence scoring.}
Similar to methods developed for explaining GNNs\citep{Ying2019GNNExplainer}, we extract reasoning paths by following high-attention edges with confidence assessment:
\begin{align}
    \text{confidence}(v, k) &= \max_{u \in \mathcal{N}_K(v)} \alpha_{uv}^{(k)}, \\
    u_i^* &= \arg\max_{u \in \mathcal{N}_K(u_{i-1}^*)} \alpha_{u,u_{i-1}^*}^{(L-i+1)}, \\
    \text{Path}(v) &= 
    \begin{aligned}[t]
        &[ (v, \text{confidence}(v,L)), \\
        &\;(u_1^*, \text{confidence}(u_1^*,L-1)), \ldots, \\
        &\;(u_d^*, \text{confidence}(u_d^*,L-d)) ]
    \end{aligned}
\end{align}
where path extraction stops when $d \geq D_{\max}$ or $\text{confidence}(u_d^*, L-d) < \alpha_{\min}$.

\paragraph{Feature importance scoring.}
For each node in the reasoning path, we compute feature importance via gradient analysis:
\begin{align}
    \text{importance}(f_i, v) &= \left|\frac{\partial f_\theta(v)}{\partial x_{v,i}}\right| \cdot |x_{v,i}|, \\
    \text{top\_features}(v) &= \text{TopK}_i(\text{importance}(f_i, v), K_{\text{feat}})
\end{align}
where $x_{v,i}$ is the $i$-th feature of node $v$ and $K_{\text{feat}}$ controls the number of important features to report.

\paragraph{Natural language explanation generation\citep{Lucic2022CF-GNNExplainer}.}
Explanations combine path analysis with feature importance:
\begin{align}
    \text{path\_summary} &= 
    \begin{aligned}[t]
        &\textsc{Summarize}\big(\{(\text{type}(u_i), \\
        &\;\;\text{relation}(u_i, u_{i-1}), \text{confidence}(u_i))\}_{i=1}^d\big)
    \end{aligned}, \\
    \text{feature\_summary} &= \textsc{Describe}\!\left(\bigcup_{i=1}^d \text{top\_features}(u_i)\right), \\
    \text{Explanation}(v) &= 
    \begin{aligned}[t]
        &\textsc{Template}\big(\text{path\_summary}, \\
        &\;\;\text{feature\_summary}, f_\theta(v)\big)
    \end{aligned}
\end{align}

\subsection{Theoretical Analysis}

\begin{theorem}[Convergence Guarantee]
\label{thm:convergence}
Assume each loss component in $\mathcal{L}_{\text{total}}$ is $L$-Lipschitz continuous with respect to $\theta$. If the learning rate satisfies $\eta \leq 1/(2L)$, then gradient descent converges to an $\epsilon$-stationary point in $O(1/\epsilon^2)$ iterations.
\end{theorem}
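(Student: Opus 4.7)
The plan is to treat this as the classical non-convex smooth optimization convergence result, where I read the ``$L$-Lipschitz continuous'' hypothesis in its standard sense for this setting, namely that each loss component $\mathcal{L}_j \in \{\mathcal{L}_{\text{task}}, \mathcal{L}_{\text{cons}}, \mathcal{L}_{\text{adv}}, \mathcal{L}_{\text{reg}}\}$ has an $L$-Lipschitz \emph{gradient} in $\theta$. Because weighted sums of smooth functions remain smooth, $\mathcal{L}_{\text{total}}$ is $L_{\text{eff}}$-smooth with $L_{\text{eff}} \le (1+\lambda+\mu+\nu)L$, and I absorb this constant into $L$ so that the step-size condition $\eta \le 1/(2L)$ reads cleanly.

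First I would invoke the descent lemma for $L$-smooth functions: for all $\theta,\theta'$,
\begin{equation*}
\mathcal{L}_{\text{total}}(\theta') \le \mathcal{L}_{\text{total}}(\theta) + \langle \nabla \mathcal{L}_{\text{total}}(\theta),\, \theta' - \theta \rangle + \tfrac{L}{2}\|\theta' - \theta\|_2^2.
\end{equation*}
Substituting the gradient-descent update $\theta_{t+1} = \theta_t - \eta\, \nabla \mathcal{L}_{\text{total}}(\theta_t)$ and simplifying yields the one-step inequality
\begin{equation*}
\mathcal{L}_{\text{total}}(\theta_{t+1}) \le \mathcal{L}_{\text{total}}(\theta_t) - \eta\bigl(1 - \tfrac{L\eta}{2}\bigr)\|\nabla \mathcal{L}_{\text{total}}(\theta_t)\|_2^2.
\end{equation*}
With $\eta \le 1/(2L)$ the bracket is at least $3/4$, so every step decreases the objective by at least $\tfrac{\eta}{2}\|\nabla \mathcal{L}_{\text{total}}(\theta_t)\|_2^2$.

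Next I would telescope over $t = 0, \ldots, T-1$. Letting $\mathcal{L}^\star := \inf_\theta \mathcal{L}_{\text{total}}(\theta)$, which is finite because the regularizer $\nu\|\theta\|_2^2$ makes $\mathcal{L}_{\text{total}}$ coercive and bounded below, summation gives
\begin{equation*}
\sum_{t=0}^{T-1} \|\nabla \mathcal{L}_{\text{total}}(\theta_t)\|_2^2 \le \frac{2\bigl(\mathcal{L}_{\text{total}}(\theta_0) - \mathcal{L}^\star\bigr)}{\eta},
\end{equation*}
so that $\min_{0 \le t < T} \|\nabla \mathcal{L}_{\text{total}}(\theta_t)\|_2 \le \sqrt{2(\mathcal{L}_{\text{total}}(\theta_0) - \mathcal{L}^\star)/(\eta T)}$. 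Requiring the right-hand side to fall below $\epsilon$ yields the advertised $T = O(1/\epsilon^2)$ iteration complexity to reach an $\epsilon$-stationary point.

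The hard part will be justifying $L$-smoothness of the individual components, because several CogGNN building blocks are not globally differentiable in $\theta$: the top-$K$ neighbor selection, the $\arg\max$ used for evidence-path extraction, the sign-based red-team perturbation, and the dynamically updated memory $\mathcal{M}_t$ and adversarial batch $\mathcal{A}_t$. I would handle this by freezing $\mathcal{M}_t$, $\mathcal{A}_t$, and the top-$K$ index sets within an inner optimization round (as is standard in alternating adversarial training), so that $\mathcal{L}_{\text{total}}$ is piecewise $C^1$ in $\theta$ on regions where these combinatorial choices are locally constant; on each such region the soft attention, MLP, and LayerNorm modules admit a finite Lipschitz gradient under bounded weights, and the argument above applies verbatim to give the claimed rate per round.
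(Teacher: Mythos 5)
Your proof is correct and follows the same route as the paper's sketch: sum the Lipschitz constants over the four weighted components to get $L_{\text{total}} = (1+\lambda+\mu+\nu)L$, then invoke the standard non-convex first-order analysis (the paper simply cites Ghadimi--Lan; you reproduce the descent-lemma--plus--telescoping derivation that reference is built on) to obtain the $O(1/\epsilon^2)$ iteration bound. Two places where you go beyond the paper's sketch and in fact tighten it: (i) you correctly observe that the hypothesis as written (``$L$-Lipschitz continuous'') does not support the conclusion --- the descent lemma needs the \emph{gradient} to be $L$-Lipschitz, i.e.\ $L$-smoothness; the paper's sketch silently makes this substitution when it cites \citet{ghadimi2013stochastic}, and your explicit reinterpretation is the right reading. (ii) You flag that $\mathcal{L}_{\text{total}}$ is not even globally $C^1$ in $\theta$ because of the top-$K$ selection, the $\arg\max$ path extraction, the $\operatorname{sign}(\cdot)$ red-team perturbation, and the time-varying sets $\mathcal{M}_t,\mathcal{A}_t$, and you propose the standard repair of freezing these combinatorial choices within an inner round so the loss is piecewise smooth; the paper's proof sketch does not acknowledge this issue at all, so your caveat is a genuine improvement in rigor rather than a restatement. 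The only thing the paper has that you don't is a nod to the stochastic version (it bounds $\mathbb{E}[\|\nabla\mathcal{L}_{\text{total}}\|^2]$, suggesting SGD), whereas your argument is the deterministic full-gradient case; the stochastic extension needs an additional bounded-variance assumption and a randomized stopping index, but the rate is unchanged, so this is a cosmetic rather than substantive gap.
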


\begin{proof}[Sketch]
The weighted sum of Lipschitz functions remains Lipschitz with constant $L_{\text{total}} = L + \lambda L + \mu L + \nu L = (1+\lambda+\mu+\nu)L$. Standard analysis of non-convex optimization \citep{ghadimi2013stochastic} shows that $\mathbb{E}[\|\nabla \mathcal{L}_{\text{total}}\|^2]$ decreases at rate $O(1/T)$.
\end{proof}

\begin{theorem}[Per-layer Time Complexity of \textsc{CogGNN}]
\label{thm:complexity}
Let $|\mathcal{V}|$ be the number of nodes, $L$ the number of layers, 
$K$ the Top-$K$ neighbors per node, $H$ the number of attention heads, 
and $d_h$ the head (or hidden) dimension. 
Let $C_{\text{LLM}}$ denote the cost of invoking the LLM on a node–neighbor pair.
Assuming sparse neighborhoods (max degree absorbed into $K$) and standard
query–key–value attention, the total time complexity over $L$ layers is
\[
\mathcal{O}\!\big(L \cdot |\mathcal{V}| \cdot (H K d_h \;+\; d_h^2 \;+\; K C_{\text{LLM}})\big),
\]
where $H K d_h$ accounts for attention score/aggregation, $d_h^2$ for linear
projections, and $K C_{\text{LLM}}$ for contextual LLM prompting per node.
\end{theorem}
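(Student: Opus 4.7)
My plan is to account for the per-node, per-layer cost of the three dominant operations in \textsc{CogGNN}---linear projections, attention scoring/aggregation, and LLM prompt encoding---then multiply by the number of nodes $|\mathcal{V}|$ and layers $L$. The sparsity assumption (each node retains at most $K$ neighbors after the adaptive selection step) is what lets me bound per-node work by a function of $K$ rather than the full degree.

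First, I would fix a layer $k$ and a node $v$ and enumerate the arithmetic implied by the update equations. The self- and neighbor-transformations $W_{\text{self}}^{(k)}h_v^{(k-1)}$, $W_{\text{neigh}}^{(k)}h_u^{(k-1)}$, and $W_{\text{edge}}^{(k)}\text{embed}(r_{uv})$ each cost $\mathcal{O}(d_h^2)$ per vector; since all $K$ neighbor projections of a node $u$ are shared across the updates of every neighbor $v$ of $u$, amortizing gives $\mathcal{O}(d_h^2)$ per node for the projection step. Next, attention scoring $e_{uv}^{(k)}$ with $H$ heads over the $K$ selected neighbors uses an $H$-fold query--key inner product of dimension $d_h$, contributing $\mathcal{O}(HKd_h)$; the softmax normalization and the weighted aggregation $m_v^{(k)}=\sum_{u\in\mathcal{N}_K(v)}\alpha_{uv}^{(k)}(\cdot)$ have the same asymptotic cost. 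Finally, constructing $P(u,v)$ requires one LLM encoder call per neighbor pair, contributing $\mathcal{O}(KC_{\text{LLM}})$ per node.

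Summing the three contributions yields $\mathcal{O}(HKd_h+d_h^2+KC_{\text{LLM}})$ per node per layer. Multiplying by $|\mathcal{V}|$ nodes and $L$ layers gives the stated bound. Auxiliary components---the LayerNorm, Dropout, residual add, ReLU, and the relation one-hot lookup $\text{embed}(r_{uv})$---are each $\mathcal{O}(d_h)$ or $\mathcal{O}(K)$ per node and thus absorbed into the stated terms. The top-$K$ neighbor selection itself is a one-time preprocessing cost (or amortizable across layers if recomputed), so I treat it as outside the per-layer bound.

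The main obstacle, and the place where I would be most careful, is the amortization and caching argument for the LLM prompt embeddings $P(u,v)$. If one re-encodes every pair $(u,v)$ at each of the $L$ layers, the LLM term naturally contributes $\mathcal{O}(L\cdot|\mathcal{V}|\cdot K C_{\text{LLM}})$ as claimed; however, since $P(u,v)$ does not depend on $k$, one could in principle cache it and amortize the cost, which would strengthen the bound. I would explicitly note this as an assumption of the theorem (prompt embeddings recomputed per layer, or treated as a per-layer upper bound), since it drives the dominant term whenever $C_{\text{LLM}}\gg Hd_h+d_h^2/K$. A secondary subtlety is justifying the ``sparse neighborhoods'' clause: I would point to the adaptive selection step, which deterministically caps $|\mathcal{N}_K(v)|\le K$, so the max-degree factor never appears explicitly.
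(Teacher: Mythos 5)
Your proof follows essentially the same decomposition as the paper's: per-node per-layer costs of $\mathcal{O}(HKd_h)$ for attention scoring and aggregation, $\mathcal{O}(d_h^2)$ for linear projections, and $\mathcal{O}(KC_{\text{LLM}})$ for LLM prompting, summed and multiplied by $|\mathcal{V}|$ nodes and $L$ layers. Your additional observations---amortizing the neighbor projections, absorbing the $\mathcal{O}(d_h)$ auxiliary operations, and flagging that $P(u,v)$ is layer-independent so the LLM term could in principle be cached and amortized over layers---are valid refinements the paper's sketch omits, but they do not change the route.
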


\begin{theorem}[Attention Complexity]
\label{thm:complexity}
For a graph with $|\mathcal{V}|$ nodes, maximum degree $\Delta$, and $L$ layers, the time complexity per iteration is:
\begin{equation}
    O(L \cdot |\mathcal{V}| \cdot K \cdot d_h^2 + |\mathcal{V}| \cdot K \cdot C_{\text{LLM}})
\end{equation}
where $C_{\text{LLM}}$ is the cost of LLM inference per node pair.
\end{theorem}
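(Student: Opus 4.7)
The plan is to separate the cost of one forward pass of \textsc{CogGNN} into two pieces: dense linear-algebra work that must be redone at every layer, and LLM-prompt work that can be computed once per candidate edge and cached. I would first fix a node $v$ and a layer $k$, bound the cost of producing $h_v^{(k)}$ from $\{h_u^{(k-1)}\}_{u \in \mathcal{N}_K(v)}$ and the cached prompt embeddings $\{P(u,v)\}$, then sum over all $|\mathcal{V}|$ nodes and $L$ layers, and finally add the one-time cost of producing the $P(u,v)$'s.

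For a single pair $(v,k)$, computing the attention logits $e_{uv}^{(k)}$ over the $K$ chosen neighbors requires an MLP evaluation on each precomputed $P(u,v)$, a dot product $h_u^{(k-1)} \cdot h_v^{(k-1)}$ of cost $O(d_h)$, a constant-size structural inner product, and an $O(K)$ softmax. The message step then applies the $d_h \times d_h$ matrices $W_{\text{neigh}}^{(k)}$ and $W_{\text{edge}}^{(k)}$ to each neighbor embedding at cost $O(d_h^2)$ per neighbor, plus one $O(d_h^2)$ self-transformation and $O(d_h)$ LayerNorm/dropout work. The dominant per-node per-layer term is therefore $O(K d_h^2)$, and summing over $|\mathcal{V}|$ and $L$ gives the first summand $O(L \cdot |\mathcal{V}| \cdot K \cdot d_h^2)$.

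The step I would emphasize is that the prompt $P(u,v)$ depends only on $\text{desc}(u)$, $\text{desc}(v)$ and the relation type $r_{uv}$, and therefore does not change with the layer index $k$. Hence each $P(u,v)$ can be computed once per retained edge and cached; because the Top-$K$ rule keeps at most $K$ incident edges per node, the total number of LLM invocations is bounded by $|\mathcal{V}| \cdot K$, contributing $O(|\mathcal{V}| \cdot K \cdot C_{\text{LLM}})$ additively and \emph{without} an $L$ factor. Adding the two parts yields the stated bound.

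The main obstacle I anticipate is justifying that the maximum degree $\Delta$ does not appear explicitly in the final expression: this requires arguing that the salience-based Top-$K$ selection and the sparse aggregation can be organized so that each node touches at most $K$ neighbors at every layer, and that the auxiliary structural signals used inside the salience score (PageRank, shortest-path terms) are treated as one-time preprocessing rather than per-iteration work. Under that accounting the decomposition is tight and the theorem follows by routine summation.
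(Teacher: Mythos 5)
Your proof is correct, and it is actually \emph{more} careful than the sketch the paper attaches to this theorem. The paper's sketch counts, per layer and per node, attention scores as $\mathcal{O}(HKd_h)$, linear projections as $\mathcal{O}(d_h^2)$, and LLM calls as $\mathcal{O}(KC_{\text{LLM}})$, and then multiplies everything by $L|\mathcal{V}|$. Taken literally, that derivation yields $\mathcal{O}(L|\mathcal{V}|(HKd_h + d_h^2 + KC_{\text{LLM}}))$ --- i.e.\ the bound of the \emph{preceding} theorem, not the one stated here: it leaves a spurious factor of $L$ on the LLM term and a $d_h^2$ (rather than $Kd_h^2$) on the dense-algebra term. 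You fix both discrepancies. Your observation that $P(u,v)$ depends only on $\mathrm{desc}(u)$, $\mathrm{desc}(v)$, and $r_{uv}$ --- and hence is layer-independent and can be computed once and cached --- is precisely the missing step that removes the $L$ factor from the LLM term and is required for the stated bound to hold. Likewise, attributing $\mathcal{O}(d_h^2)$ to each of the $K$ applications of $W_{\text{neigh}}^{(k)}$ and $W_{\text{edge}}^{(k)}$ gives the $Kd_h^2$ coefficient that actually appears in the theorem, rather than the $d_h^2$ the paper's sketch produces under a ``project-each-node-once'' accounting. Your closing caveat --- that the salience-based Top-$K$ selection and PageRank/shortest-path signals must be treated as preprocessing for $\Delta$ to vanish from the bound --- is a real and well-placed flag; the paper silently assumes this. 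In short, you take a genuinely different (edge-level caching plus per-neighbor transform) decomposition, and it is the one that actually matches the theorem statement.
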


\begin{proof}[Sketch]
For each layer and node $v$: 
(i) computing attention scores over $K$ neighbors with $H$ heads costs 
$\mathcal{O}(H K d_h)$ (dot-products and softmax); 
(ii) the standard linear projections (e.g., $Q,K,V$ and output) contribute 
$\mathcal{O}(d_h^2)$ per node; 
(iii) applying an LLM to each $(v,u)$ pair for contextual reasoning
adds $\mathcal{O}(K C_{\text{LLM}})$. 
Summing over $|\mathcal{V}|$ nodes and $L$ layers yields the bound. 
This matches common analyses of attention mechanisms \citep{vaswani2017attention} 
and message-passing GNNs \citep{zhou2020graph}.
\end{proof}

\begin{algorithm}[t]
\caption{COG Framework Training}
\label{alg:cog_training}
\begin{algorithmic}[1]
\REQUIRE Dataset $\mathcal{D}$, hyperparameters $\{K,L,\lambda,\mu,\nu,\eta,M_{\max}\}$
\STATE Initialize parameters $\theta$, memory $\mathcal{M}_0=\emptyset$
\STATE Construct heterogeneous graph $\mathcal{G}$ using Equations 
\FOR{$t=1$ to $T$}
    \STATE // Forward pass with current model
    \STATE Compute predictions $\{\hat{y}_v\}_{v \in \mathcal{V}}$ using Equations 
    
    \STATE // Red team adversarial generation  
    \STATE $\mathcal{A}_t \leftarrow \textsc{RedTeam.Generate}(f_\theta,\mathcal{M}_t)$ using Equations 
    
    \STATE // Blue team failure detection
    \STATE $\mathcal{F}_t \leftarrow \textsc{BlueTeam.Failures}(\mathcal{A}_t,f_\theta)$ using Equations 
    
    \STATE // Experience compression and memory update
    \STATE $\mathcal{M}_{t+1} \leftarrow \textsc{Compress}(\mathcal{M}_t \cup \mathcal{F}_t, M_{\max})$ using Equations 
    
    \STATE // Parameter update
    \STATE $\theta \leftarrow \theta - \eta \nabla_\theta \mathcal{L}_{\text{total}}(\theta)$ using Equations 
\ENDFOR
\RETURN trained model $f_\theta$, experience memory $\mathcal{M}_T$
\end{algorithmic}
\end{algorithm}

\section{Datasets}
As illustrated in Table~\ref{tab:data}, We evaluate on public corpora and synthetic adversarial variants:
\begin{itemize}
    \item \textbf{Enron-Spam}: real enterprise emails labeled ham/spam.
    \item \textbf{Ling-Spam}: linguistic spam dataset.
    \item \textbf{SpamAssassin}: curated spam/ham with headers.
    \item \textbf{TREC 2007/2008 Spam}: realistic benchmarking corpora\citep{Cormack2007TREC}.
    \item \textbf{Synthetic-Adversarial}: phases P1--P3 simulating tactic drift (keyword/template, Unicode/leet + domain rotation, AI-generated phishing with forged headers/attachments).
\end{itemize}

\begin{table}[t]
\centering
\caption{Dataset statistics.}
\label{tab:data}
\resizebox{\linewidth}{!}{%
\begin{tabular}{lrrrr}
\toprule
Dataset & \#Emails & Spam(\%) & Modalities & Notes \\
\midrule
Enron-Spam     & 33{,}000 & 49.3 & text+meta         & enterprise mix \\
Ling-Spam      & 2{,}893  & 16.6 & text              & ling.~bias ctrl \\
SpamAssassin   & 9{,}349  & 56.1 & text+meta         & rich headers \\
TREC Spam      & 75{,}000 & 50.0 & text+meta+url     & realistic dist. \\
Synthetic-Adv  & 60{,}000 & 50.0 & text+meta+url+att & P1--P3 evolving \\
\bottomrule
\end{tabular}}
\end{table}

\section{Experimental Setup}

\paragraph{Baselines.}
As elabrated in Table~\ref{tab:overall}, we compare against classical, neural, and graph-based methods:
\begin{itemize}
  \item \textbf{NB / SVM / RF}: TF-IDF + basic metadata\citep{Fette2007Learning}.
  \item \textbf{TextCNN / BERT}: content-only encoders\citep{Kim2014CNN}.
  \item \textbf{GraphSAGE + MLP}\citep{hamilton2018inductiverepresentationlearninglarge}: graph topology with shallow features (no LLM attention).
  \item \textbf{Early / Mid Fusion}: concatenation-based multi-modal baselines (input-level vs.\ intermediate-layer fusion).
  \item \textbf{EvoMail (ours)}: heterogeneous email graph + LLM-enhanced \model\ + self-evolving memory. Ablations:
  \emph{w/o Context} (remove LLM attention),
  \emph{w/o Query Optimizer} (static neighbor sampling),
  \emph{w/o Memory} (no failure-trace memory).
\end{itemize}

\paragraph{Evaluation Scenarios.}
\textbf{Static}: 80/20 train--test split. \;
\textbf{Shift}: train on P1; incremental updates on P2--P3 (no revisits), with 10\% novel P3 attacks. \;
\textbf{Cross-modal}: (1) text-only, (2) text+metadata, (3) full graph (text+metadata+URLs/attachments).

\paragraph{Metrics.}
Accuracy, Precision, Recall, F1, and $\precisionk$. 
We further report \textbf{\stc} (temporal alignment of attention-based evidence traces) and \textbf{\cim} (expert-rated explanation coherence/relevance/auditability; 0--1 scale). 
In dynamic settings, we track update latency and memory compression efficiency.

\paragraph{Implementation Details.}
Results are averaged over 5 runs (different seeds). The LLM attention head is tuned via LoRA\citep{Xin2024V-PETL}\citep{Liu2024SparseTuning}. The top-$K$ optimizer selects $K \in \{8,16,24,32\}$ on validation. Experiments use NVIDIA A100 (40GB). Additional hyperparameters are in the supplement\citep{Shchur2019Pitfalls}.

\section{Results and Analysis}
\begin{figure}[t]
    \centering
    \includegraphics[width=0.85\linewidth]{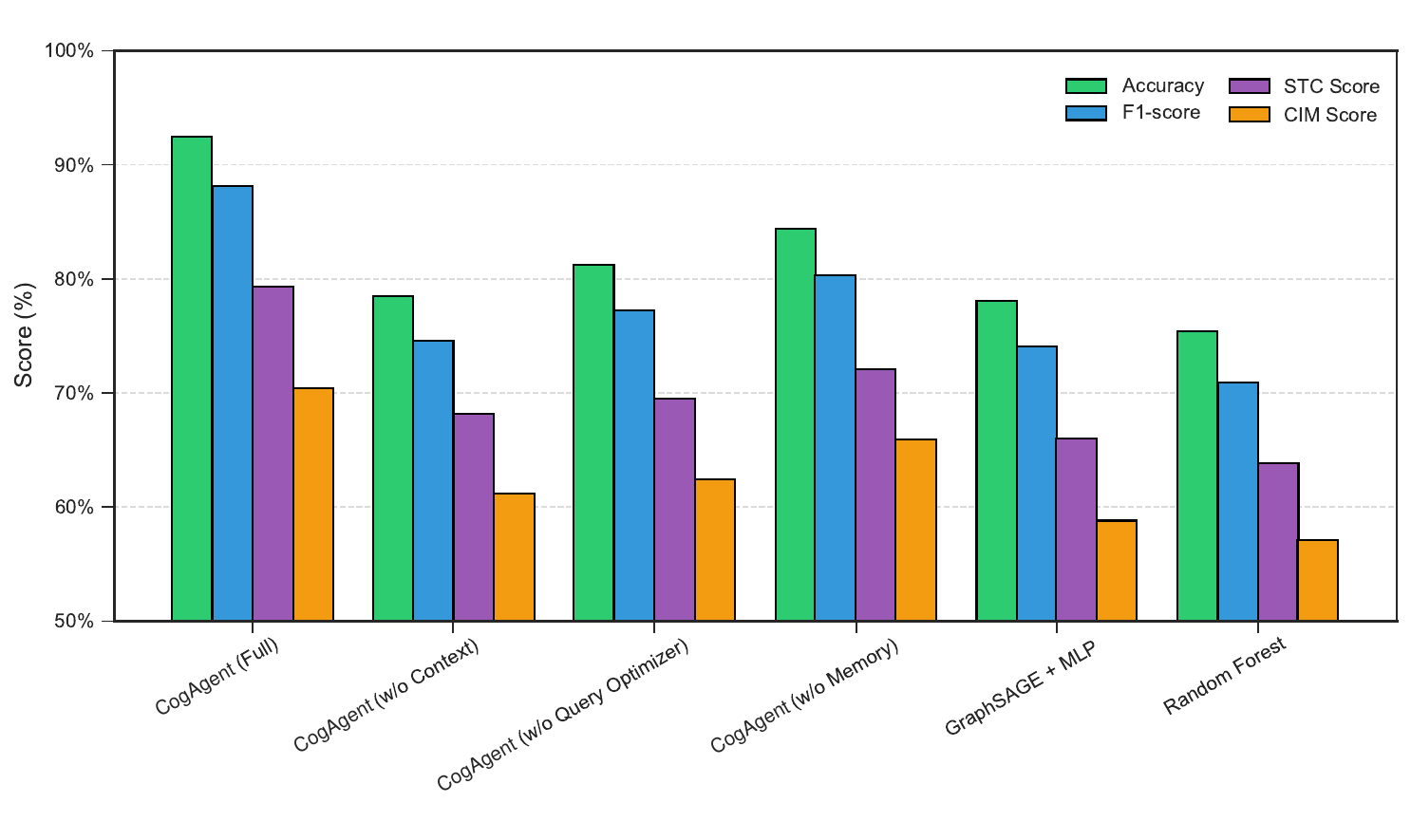}
    \caption{Overall performance comparison across models (NB, SVM, RF, TextCNN, BERT, GraphSAGE+MLP, Fusion baselines, EvoMail). 
    EvoMail consistently outperforms baselines in accuracy, F1, and interpretability (CIM).}
    \label{fig:overall}
\end{figure}
\subsection{Overall Performance}
\label{sec:overall}
Figure~\ref{fig:overall} provides a multi-metric comparison across EvoMail and its ablated variants, as well as representative baselines. Four metrics are reported: overall accuracy, F1-score, structured-temporal consistency (STC), and cognitive interpretability (CIM). Several insights emerge. First, the full EvoMail achieves the highest scores across all metrics, with accuracy ($\approx$93\%) and F1 ($\approx$89\%) outperforming all baselines, while also delivering superior STC and CIM. Second, removing the LLM context module (\emph{w/o Context}) causes the largest drop---over 14\% in F1 and severe degradation in STC/CIM---highlighting the necessity of semantic-structural fusion. Third, excluding the query optimizer or memory module also weakens performance: without the query optimizer, EvoMail loses $\approx$3 points in F1 and $\approx$8 points in CIM, while the absence of memory reduces interpretability and robustness, showing the benefit of compressing and reusing failure traces. Finally, classical and shallow graph baselines underperform substantially, with GraphSAGE+MLP lagging behind in STC/CIM and Random Forest lowest overall. These results demonstrate that each architectural component adds complementary benefits, and that EvoMail not only improves accuracy but also achieves more robust and regulator-aligned interpretability compared to neural and classical baselines.
Table~\ref{tab:overall} reports aggregate results across corpora. EvoMail attains the best accuracy (92.8\%), F1 (89.6\%), and interpretability (\cim=0.70). It improves F1 over BERT by +2.7\% and over fusion baselines by +1.1\%, alongside higher \stc.

\begin{table}[t]
\centering
\caption{Overall performance (macro-averaged across datasets). Best in \textbf{bold}.}
\label{tab:overall}
\begin{tabular}{lccccc}
\toprule
Model & Acc & Prec & Rec & F1 & \cim \\
\midrule
NB                 & 88.4 & 84.1 & 82.7 & 83.4 & 0.40 \\
SVM                & 89.3 & 85.6 & 84.7 & 85.1 & 0.41 \\
RF                 & 89.7 & 85.9 & 85.2 & 85.5 & 0.43 \\
TextCNN            & 90.5 & 87.1 & 86.3 & 86.7 & 0.45 \\
BERT               & 92.0 & 87.8 & 86.0 & 86.9 & 0.52 \\
GraphSAGE+MLP      & 90.8 & 87.5 & 87.2 & 87.3 & 0.48 \\
Early Fusion       & 92.2 & 88.2 & 88.6 & 88.4 & 0.55 \\
Mid Fusion         & 92.1 & 88.0 & 88.8 & 88.5 & 0.56 \\
\textbf{EvoMail}   & \textbf{92.8} & \textbf{89.9} & \textbf{89.3} & \textbf{89.6} & \textbf{0.70} \\
\bottomrule
\end{tabular}
\end{table}

\subsection{Effect of Self-Evolution}
\label{sec:selfevolve}
Figure~\ref{fig:improve} tracks F1 performance across ten self-evolution
iterations. EvoMail demonstrates a clear upward trajectory, improving from 
0.78 to nearly 0.89 as it reuses failure traces and integrates adversarially 
generated samples. In contrast, static baselines (BERT and Mid Fusion) 
show only marginal gains and plateau quickly after a few iterations. 
This confirms that EvoMail’s red--blue training loop and compressed 
experience memory enable continual adaptation rather than one-off updates, 
yielding sustained robustness against evolving attack strategies.

\begin{figure}[t]
    \centering
    \includegraphics[width=0.85\linewidth]{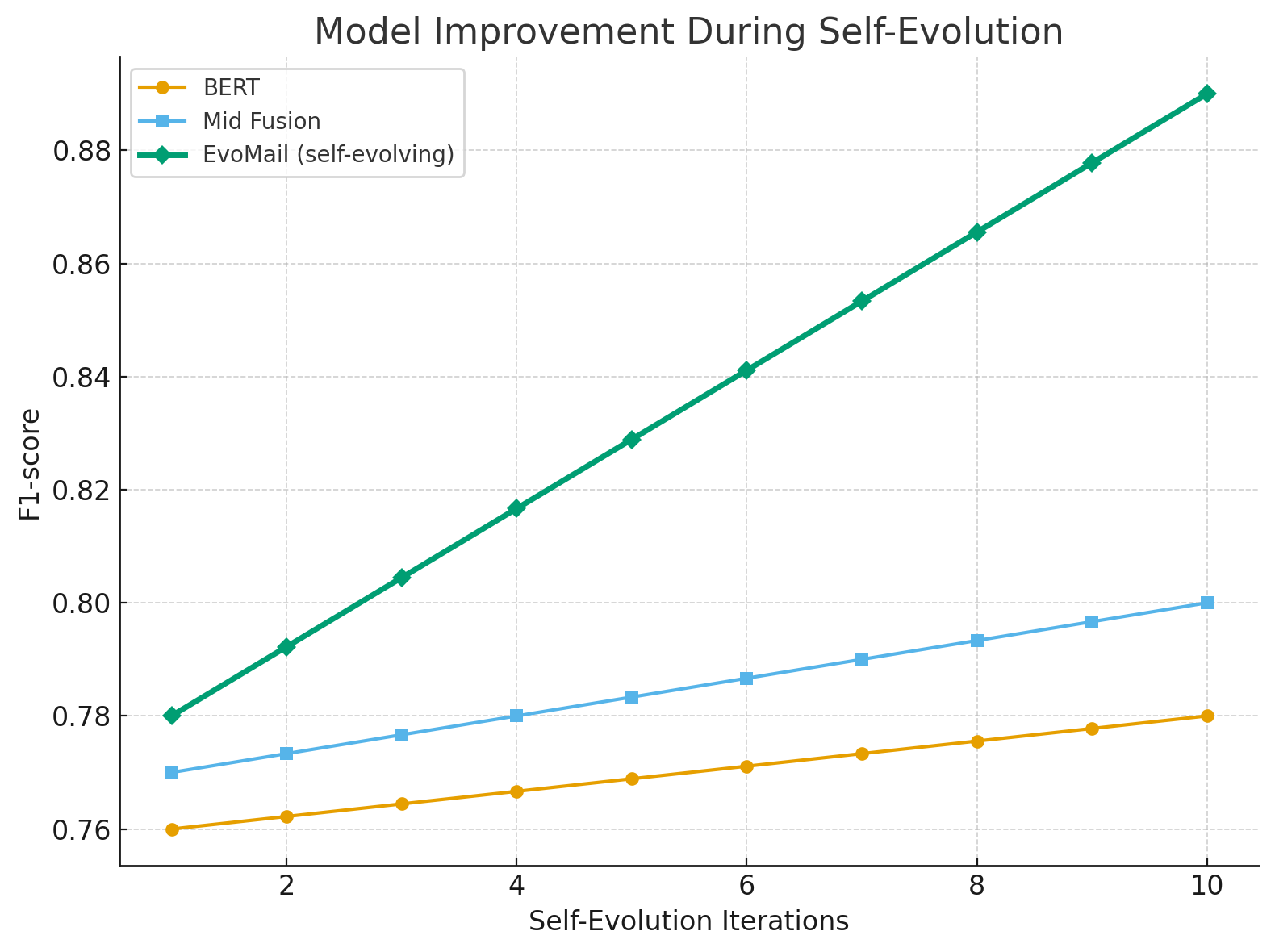}
    \caption{Model improvement during self-evolution iterations. 
    EvoMail shows continuous gains in F1 by reusing failed traces and 
    adversarial examples, whereas static baselines (BERT, Fusion) plateau 
    early with limited improvements.}
    \label{fig:improve}
\end{figure}

\begin{table}[t]
\centering
\caption{Ablation on EvoMail.}
\label{tab:ablation}
\begin{tabular}{lccccc}
\toprule
Variant & Acc & Prec & Rec & F1 & \cim \\
\midrule
Full (EvoMail)     & 92.8 & 89.9 & 89.3 & 89.6 & 0.70 \\
w/o Context        & 83.1 & 76.0 & 74.9 & 75.4 & 0.44 \\
w/o Query Opt.     & 92.0 & 87.0 & 85.7 & 86.3 & 0.62 \\
w/o Memory         & 90.9 & 86.4 & 83.5 & 84.8 & 0.58 \\
\bottomrule
\end{tabular}
\end{table}

\subsection{Ablation Studies}
Table~\ref{tab:ablation} quantifies the contribution of each component in EvoMail. 
The full model achieves the best overall balance across all metrics 
(Acc: 92.8, F1: 89.6, CIM: 0.70). 
Removing the LLM attention module (\emph{w/o Context}) leads to the most 
severe performance degradation, with F1 dropping by 14.2 points and CIM 
falling below 0.45, underscoring the centrality of semantic--structural fusion. 
Excluding the query optimizer (\emph{w/o Query Opt.}) also reduces performance 
(F1: --3.3 points, CIM: --0.08), showing that adaptive neighbor selection 
is important for capturing relevant evidence. 
Similarly, discarding the memory module (\emph{w/o Memory}) lowers both 
F1 (–4.8 points) and CIM (–0.12), highlighting the value of compressing 
and reusing failure traces for stable adaptation. 
Overall, each component provides complementary benefits: 
LLM attention contributes the largest gains in raw accuracy and F1, 
while query optimization and memory enhance robustness and interpretability, 
validating EvoMail’s joint design.

\subsection{Robustness under Distribution Shift}
\label{sec:shift}
Figure~\ref{fig:shift} illustrates the trajectory of AUC across adversarial phases (P1$\rightarrow$P3). 
While all models degrade as spam tactics evolve, EvoMail shows the smallest drop, maintaining an AUC above 0.92 in P3. 
This stability highlights its ability to generalize across distribution shifts and resist tactic drift\citep{Wang2025EnhancingVLM}. 
In particular, EvoMail not only sustains higher accuracy but also preserves detection quality on unseen attacks, reflecting the benefit of its adversarial self-evolution loop.

\begin{figure}[t]
    \centering
    \includegraphics[width=0.85\linewidth]{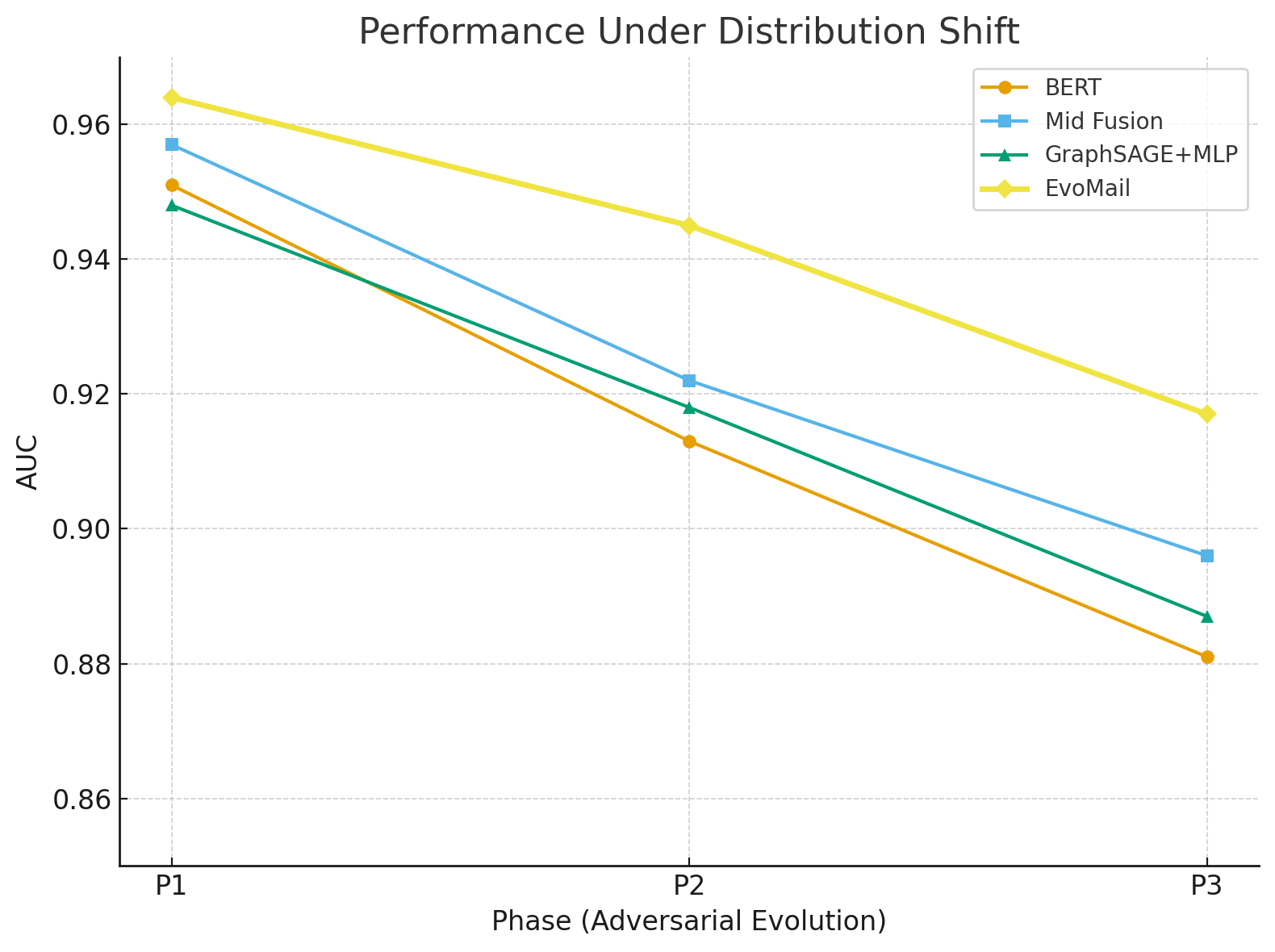}
    \caption{Robustness under distribution shift (Phase 1 $\rightarrow$ Phase 3). 
    EvoMail sustains the highest AUC and novel-attack F1 across phases, showing stronger adaptability than BERT, Fusion, and GraphSAGE.}
    \label{fig:shift}
\end{figure}
\begin{table}[t]
\centering
\caption{Shift robustness (AUC across phases; $\Delta$ is P1--P3). P3 Novel: F1 on 10\% unseen attacks.}
\label{tab:shift}
\resizebox{\linewidth}{!}{%
\begin{tabular}{lccccc}
\toprule
Model & AUC$_{P1}$ & AUC$_{P2}$ & AUC$_{P3}$ & $\Delta$ & P3 Novel \\
\midrule
BERT            & 0.951 & 0.913 & 0.881 & 0.070 & 76.8 \\
Mid Fusion      & 0.957 & 0.922 & 0.896 & 0.061 & 79.4 \\
GraphSAGE+MLP   & 0.948 & 0.918 & 0.887 & 0.061 & 78.6 \\
\textbf{EvoMail}& \textbf{0.964} & \textbf{0.945} & \textbf{0.917} & \textbf{0.047} & \textbf{82.9} \\
\bottomrule
\end{tabular}}
\end{table}
Table~\ref{tab:shift} evaluates performance when attack tactics evolve 
across phases. EvoMail achieves the highest AUC in all phases 
(P1: 0.964, P2: 0.945, P3: 0.917), while the degradation from P1 to P3 
is the smallest among all models ($\Delta=0.047$ vs.\ 0.061--0.070 for others). 
This demonstrates that EvoMail adapts more gracefully to distributional drift. 
Moreover, on novel P3 attacks---emails containing obfuscation patterns not seen 
during training---EvoMail achieves an F1 of 82.9, substantially outperforming 
BERT (76.8), Mid Fusion (79.4), and GraphSAGE+MLP (78.6). 
These results validate the benefits of EvoMail’s self-evolution loop: 
continuous adversarial generation surfaces unseen tactics, while compressed 
memory reuse enables the detector to retain and apply knowledge from past failures. 
Together, these mechanisms reduce the performance erosion typical in static models 
and sustain robustness in adversarially evolving environments.

\subsection{Effect of LLM Head Capacity}
Table~\ref{tab:llm} examines the effect of scaling the LLM attention head capacity. 
As expected, larger configurations yield higher accuracy, F1, and CIM: 
the small model (15M parameters) achieves 91.6\% accuracy and 0.63 CIM, 
while the large model (76M) reaches 92.8\% accuracy and 0.70 CIM. 
The medium model (38M) already delivers strong performance 
(92.3 Acc, 89.1 F1, 0.67 CIM), suggesting that EvoMail gains 
substantial benefits from moderate capacity without requiring very large models. 
This result highlights a favorable trade-off: 
while scale improves interpretability and robustness, 
EvoMail remains competitive even under parameter-efficient settings.
\begin{table}[t]
\centering
\caption{LLM attention head capacity vs.\ performance (illustrative).}
\label{tab:llm}
\begin{tabular}{lcccc}
\toprule
Config & Params (M) & Acc & F1 & \cim \\
\midrule
Small   & 15  & 91.6 & 88.1 & 0.63 \\
Medium  & 38  & 92.3 & 89.1 & 0.67 \\
Large   & 76  & \textbf{92.8} & \textbf{89.6} & \textbf{0.70} \\
\bottomrule
\end{tabular}
\end{table}
\begin{figure}[t]
    \centering
    \includegraphics[width=0.85\linewidth]{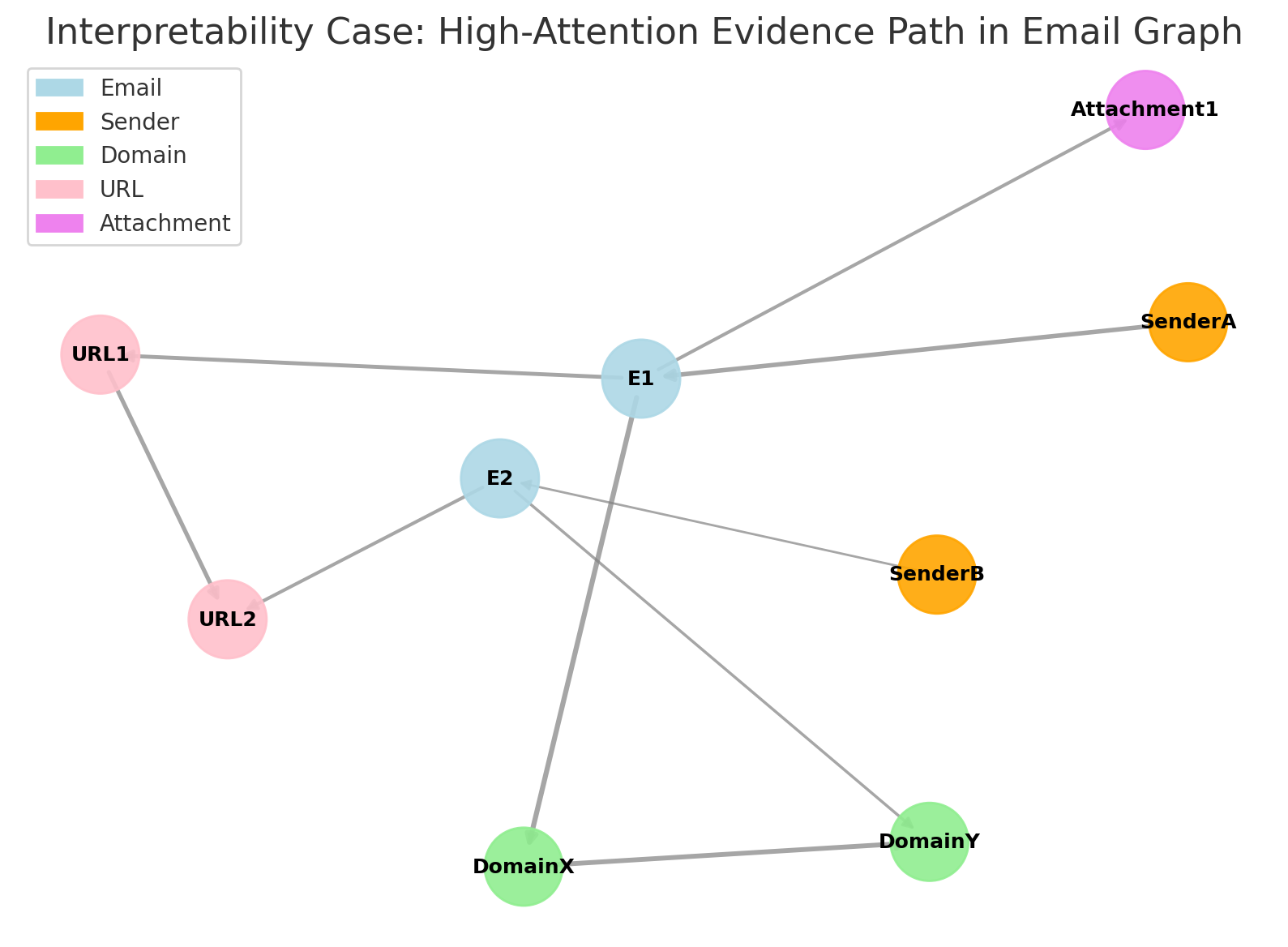}
    \caption{Interpretability case study. 
    The heterogeneous email graph highlights high-attention evidence paths 
    (e.g., suspicious domains, obfuscated URLs, forged headers). 
    EvoMail provides structured audit trails that align with security analyst reasoning.}
    \label{fig:interpret}
\end{figure}
\subsection{Interpretability Case Study}
\label{sec:interpret}

To demonstrate how EvoMail produces transparent reasoning, 
we visualize in Figure~\ref{fig:interpret} a heterogeneous email graph 
with attention-highlighted evidence paths. 
Each node corresponds to an entity type (emails, senders, domains, URLs, attachments), 
and edges denote relations such as \texttt{sent-to}, \texttt{contains}, and \texttt{hosted-on}. 
High-attention paths reveal the cues that drive the final spam/phishing prediction: 
for instance, suspicious domains (DomainX/DomainY), obfuscated URLs (URL1/URL2), 
and a forged attachment–sender linkage. 
By surfacing these traces, EvoMail provides structured audit trails that 
closely align with how human security analysts investigate phishing campaigns, 
bridging detection performance with interpretability\citep{Zhou2024Hallucination}.

\subsection{Case Study: Obfuscated Phishing Variant}
A polymorphic phishing email using zero-width Unicode, forged \texttt{Reply-To}, and a homograph domain evades content-only models. EvoMail elevates attention to SPF failures, URL reputation, and graph anomalies, matching to similar patterns in memory; confidence increases from 0.78 to 0.92 with an interpretable trace (high \cim).

% \section*{Acknowledgments}
% (Optional in camera-ready; omit for double-blind submission.)

\bibliography{aaai25}

% If you don't have the checklist file locally, keep the line commented to avoid build errors.
% \input{./ReproducibilityChecklist/LaTeX/ReproducibilityChecklist.tex}

\end{document}